\newtheorem{definition}{Definition}
\newtheorem{theorem}{Theorem}
\newtheorem{remark}{Remark}
\newcommand{\WBF}{\widehat{\diamond}}
\newcommand{\CC}{\Subset}
\newcommand{\bel}{\mathbf{b}}       
\newcommand{\base}{\mathbf{a}}        
\newcommand{\evidence}{\mathbf{r}}
\newcommand{\prob}{\mathbf{p}}
\newcommand{\Dir}{\text{Dir}}
\begin{document}

\title{Multi-Source Fusion Operations in Subjective Logic}

\author{\IEEEauthorblockN{Rens W. van der Heijden}
\IEEEauthorblockA{\textit{Institute of Distributed Systems} \\
\textit{Ulm University}\\
Ulm, Germany\\
\texttt{rens.vanderheijden@uni-ulm.de}}
\and
\IEEEauthorblockN{Henning Kopp}
\IEEEauthorblockA{\textit{Institute of Distributed Systems} \\
\textit{Ulm University}\\
Ulm, Germany\\
\texttt{henning.kopp@uni-ulm.de}}
\and
\IEEEauthorblockN{Frank Kargl}
\IEEEauthorblockA{\textit{Institute of Distributed Systems} \\
\textit{Ulm University}\\
Ulm, Germany\\
\texttt{frank.kargl@uni-ulm.de}}
}

\maketitle

\begin{abstract}
The purpose of multi-source fusion is to combine information from more than two evidence sources, or subjective opinions from multiple actors.
For subjective logic, a number of different fusion operators have been proposed, each matching a fusion scenario with different assumptions.
However, not all of these operators are associative, and therefore multi-source fusion is not well-defined for these settings.
In this paper, we address this challenge, and define multi-source fusion for weighted belief fusion (WBF) and consensus \& compromise fusion (CCF).
For WBF, we show the definition to be equivalent to the intuitive formulation under the bijective mapping between subjective logic and Dirichlet evidence PDFs.
For CCF, since there is no independent generalization, we show that the resulting multi-source fusion produces valid opinions, and explain why our generalization is sound.
For completeness, we also provide corrections to previous results for averaging and cumulative belief fusion (ABF and CBF), as well as belief constraint fusion (BCF), which is an extension of Dempster's rule.
With our generalizations of fusion operators, fusing information from multiple sources is now well-defined for all different fusion types defined in subjective logic.
This enables wider applicability of subjective logic in applications where multiple actors interact.
\end{abstract}


\section{Introduction}

Subjective logic is an extension of probabilistic logic that enables the separate representation of \emph{belief} and \emph{uncertainty}.
The belief mass assignment and uncertainty are represented in a \emph{subjective opinion} $\omega^A_X$ held by an evidence source or \emph{actor} $A$ on a random variable $X$.
Similar to other extensions introduced throughout the literature, such as Dempster-Shafer Theory (DST)~\cite{Shafer-DST,Dempster-DST}, such extensions are widely used for the purpose of data fusion.
In data fusion, observations of the same variable from multiple \emph{evidence sources} are fused by applying a fusion operator, such as Dempster's rule of combination in Dempster-Shafer Theory.
Depending on the specific application, an evidence source could be a sensor in an autonomous vehicle, information in a court case, or an expression of trust from two actors in a peer-to-peer system.
These different applications require fusion operators with different properties, such as dependence between the evidence seen by evidence sources. 
Subjective logic provides a wide variety of these operators~\cite[Ch. 12]{Josang-SL-book}, including \emph{averaging belief fusion}, \emph{cumulative belief fusion}, \emph{weighted belief fusion}, \emph{consensus \& compromise fusion} and \emph{belief constraint fusion}.
Each of these fusion operations is designed to determine the shared belief and uncertainty of a group of evidence sources, with different applications depending on how evidence should be combined.
For example, cumulative belief fusion can be thought of as adding together the evidence for each potential value of the variable $X$ from all sources and representing this as an opinion.
In contrast, averaging belief fusion computes the average of the evidence for each potential value.
These two fusion operations, along with weighted belief fusion, have direct analogs in the Dirichlet model that considers evidence.

Not all of the operators defined for subjective logic are associative, meaning that the fusion between more than two evidence sources is in general not well-defined.
However, as pointed out by J{\o}sang~\cite[Ch. 12]{Josang-SL-book}, the fusion operators have semi-associative properties, which roughly means that the individual steps of the fusion process are associative, but at the end of this process, the result is normalized.
J{\o}sang and co-authors have since proposed multi-source formulations of averaging and cumulative belief fusion in a previous work~\cite{Josang-Multi-Source}.
In this paper, we contribute further multi-source fusion formulations for weighted belief fusion (WBF) and consensus \& compromise fusion (CCF).
We also discuss a minor correction to the corner cases of ABF and CBF by J{\o}sang~et~al.~\cite{Josang-Multi-Source}.

The remainder of this work is organized as follows.
Section \ref{sec:opinions} introduces the theoretical background of subjective opinions and their relationship with Dirichlet PDFs.
Section \ref{sec:multi-source-fusion} formulates multi-source variants of WBF and CCF.
For WBF, we prove that our formulation is equivalent to the confidence-weighted averaging of the Dirichlet evidence PDFs.
For CCF, there is no direct equivalence: we only show that the multi-source formulation is consistent, generates valid opinions, and is conceptually correct.
Section \ref{sec:example} extends the example from~\cite{Josang-Multi-Source} with numerical results for the new operations and potential applications.
Section \ref{sec:conclusion} concludes our work.


\section{Subjective Opinions}
\label{sec:opinions}

This section introduces the mathematical foundation of subjective logic and the hyper-Dirichlet model, which is an extension of the Dirichlet multinomial model.
In addition, we introduce the respective notation in both settings, provide some intuition with regards to the interpretation, and describe a bijective map between subjective logic and the hyper-Dirichlet model.

\subsection{Multinomial Opinions}

Multinomial opinions~\cite[Ch 3.5]{Josang-SL-book} express belief
$\bel_X(x)\in [0,1]$ over the possible values $x$ of a
random variable $X$, as well as an uncertainty $u_X$ and a base rate
$\base_X(x)\in[0,1]$ for every possible value.
The conceptual purpose of multinomial opinions is to describe the available evidence or belief in each possible value $x\in\mathbb{X}$ in the domain of $\mathbb{X}$.
We first provide a formal definition of multinomial opinions, before describing their application and interpretation with an example.

\begin{definition}[Multinomial Opinion]\label{d_mopinion}
  Let $X\in\mathbb{X}$ be a random variable over the finite domain $\mathbb{X}$. A multinomial opinion $\omega^{A}_{X}$ held by $A$ over $X$ describes the subjective assignment of belief by $A$ to the outcome of $X$, consisting of an ordered triplet $\omega^{A}_{X} = (\bel^{A}_{X}, u^{A}_{X}, \base^{A}_{X})$. Here, $\bel^{A}_{X}: \mathbb{X}\rightarrow[0,1]$ is the belief mass distribution over $\mathbb{X}$, $u^{A}_X\in[0,1]$ represents the lack of evidence, and $\base^{A}_{X}:\mathbb{X}\rightarrow[0,1]$ is the base rate distribution, with the additivity requirements that $1 = u^{A}_{X} + \sum_{x\in\mathbb{X}}\bel^{A}_{X}(x)$ and $\sum_{x\in\mathbb{X}} \base^{A}_{X}(x) = 1$.
\end{definition}

If the random variable $X$ is the outcome of tossing a potentially weighted die that uses colors instead of numbers, then its domain $\mathbb{X}$ contains the possible color values (e.g., red, green, blue, and so on).
A multinomial opinion then assigns a \emph{belief mass} to each color.
Initially, the uncertainty $u_X$ of this distribution can be set to $1$, which is referred to as a \emph{vacuous opinion}.
The base rate of this opinion is the assumption that the die is not weighted, i.e., that all outcomes are equally likely: $a(x)=1/|\mathbb{X}| = 1/6$ for all $x\in\mathbb{X}$.
It is also possible to have opinions that assign all the available mass to belief, i.e., $u_X=0$, which is referred to as a \emph{dogmatic opinion}.
This represents an edge case, since by definition zero uncertainty means that there is absolute certainty in the belief distribution, meaning the base rate has no bearing on the outcome at all.

To see how multinomial opinions can be applied, consider the die example above.
By using opinions as evidence accumulators, each sampling of the random variable (i.e., throwing the die and adjusting the belief) increases the available evidence and thus the corresponding belief in the given outcome.
The expected outcome of our opinion projects the belief and the uncertainty to a probability $P_X(x)$ by distributing the uncertainty over the potential values in $\mathbb{X}$ according to the base rate $a(x)$.
More formally~\cite[Ch. 3]{Josang-SL-book}:
\[
  P_X(x) = \bel_X(x) + \base_X(x) \cdot u_X
\]

Multinomial opinions are often useful to represent evidence from a set of different sources $\mathbb{A}$, as common in trust management and data fusion.
Referring again to the die example above: if the observers have some kind of color blindness, or the lighting conditions are sub-optimal, one can imagine that different observers have different opinions about the same observation.
This is where multinomial opinions can be used by these observers to agree on the outcome of specific observations, as well as estimation of the real properties of the die (by observing repeated experiments).
These operations can be achieved with fusion operations, as described in Section \ref{sec:multi-source-fusion}.
One challenge for multinomial opinions is that it is not possible to represent an observer that cannot distinguish two colors in the above example (e.g., red-green color blindness), other than assigning the same amount of evidence to both values.
However, this assignment implies that the relative frequency of red and green is the same, while in fact what we want to represent is the inability to distinguish between these variables.
This is where hyper-opinions are useful.

\subsection{Hyper Opinions}

Hyper opinions~\cite[Ch. 3.6]{Josang-SL-book} are the natural extension of multinomial opinions, which allow belief assignment to composite values.
For example, taking $x_1,x_2,x_3\in\mathbb{X}$, a multinomial opinion can assign a belief $b(x_1)=0.1$, $b(x_2)=0.3$, $b(x_3)=0.4$, indicating the relative evidence between these three possible values.
Some situations require the assignment of belief to composite values, e.g., $b(x_1 \cup x_2)=0.4$, stating that there is evidence for either $x_1$ or $x_2$, without specifying the relative evidence between them.
This is exactly the example we gave above with red-green color blindness, with $x_1=\text{red}$ and $x_2=\text{green}$.
Another such example was given by Hankin~\cite{Hankin2010} in the context of Dirichlet hyper-PDFs, who uses the example of relative strengths of tennis players in a tournament.
In their example, the outcome $X=x_i$ means $i$ wins a tournament, and the belief assignment distributes the available evidence over the players.
Assigning belief to $x_1\cup x_2$ represents evidence that player 1 or 2 will win, but does not make statements about the relative probabilities between player 1 and 2.

The formal description of a hyper opinion is then:

\begin{definition}[Hyper Opinion]\label{d_hopinion}
  Let $X\in\mathbb{X}$ be a random variable over the finite domain $\mathbb{X}$, with the power set $\mathcal{P}(\mathbb{X})$ and the reduced power set $\mathcal{R}(\mathbb{X})=\mathcal{P}(\mathbb{X}) \setminus \{\mathbb{X}, \emptyset\}$. A hyper opinion $\omega^{A}_{X}$ held by $A$ over $X$ describes the subjective assignment of belief by $A$ to the outcome of $X$, consisting of an ordered triplet $\omega^{A}_{X} = (\bel^{A}_{X}, u^{A}_{X}, \base^{A}_{X})$. Here, $\bel^{A}_{X}:\mathcal{R}(\mathbb{X})\rightarrow[0,1]$ is the belief mass distribution over $\mathcal{R}(\mathbb{X})$, $u^{A}_X\in[0,1]$ represents the lack of evidence, and $\base^{A}_{X}:\mathbb{X}\rightarrow[0,1]$ is the base rate distribution over $\mathbb{X}$, with the additivity requirements that $1 = u^{A}_{X} + \sum_{x\in\mathcal{R}(\mathbb{X})}\bel^{A}_{X}(x)$ and $\sum_{x\in\mathcal{R}(\mathbb{X})} \base^{A}_{X}(x) = 1$.
\end{definition}

\begin{remark}[Base rate over singleton values]
Note that the base rate distribution is still over $X$ here, meaning that $X$ takes values from $\mathbb{X}$, not from $\mathcal{P}(\mathbb{X})$, and thus a hyper opinion has $(2^k+k-3)$ degrees of freedom for a domain $\mathbb{X}$ of cardinality $k$, as discussed in~\cite[Ch. 3, p. 40]{Josang-SL-book}.
\end{remark}

\begin{figure*}[ht!]
\fbox{\parbox{\textwidth}{
    \[
        \left(\begin{array}{lcl}
          \bel^A_X(x) &=& \frac{\evidence^A_X(x)}{W + \sum\limits_{x\in\mathcal{R}(\mathbb{X})}\evidence^{A}_X(x)} \\
            u^A_X&=&\frac{W}{W + \sum\limits_{x\in\mathcal{R}(\mathbb{X})}\evidence^{A}_X(x)}
        
        \end{array}
        \right)
        \stackrel{\text{1-to-1}}{\longleftrightarrow}
        \left(\begin{array}{lrcl}
            \underline{\text{For $u^{A}_X \neq 0$:}} & \evidence^A_X(x) &=& \frac{W\bel^A_X(x)}{u^A_X} \\
                                             & 1 &=& u^A_X + \sum\limits_{x\in\mathcal{R}(\mathbb{X})}\bel^A_X(x) \\
            \underline{\text{For $u^{A}_X = 0$:}}    & \evidence^A_X(x) &=& \bel^A_X(x)\cdot \infty \\
                                             & 1 &=& \sum\limits_{x\in\mathcal{R}(\mathbb{X})}\bel^A_X(x)
        \end{array}
        \right)
    \]
}}
  \caption{Bijection between hyper opinions (left) and Dirichlet HPDFs (right), as defined in~\cite{Josang-SL-book}.}\label{f_bijectiondirichletopinion}
\end{figure*}
\subsection{Dirichlet Distributions}

Both multinomial and hyper opinions are directly related to Dirichlet distributions.
The traditional Dirichlet distribution is defined by a vector of $k$ parameters, $\mathbf{\alpha}_X$, resulting in the notation $\Dir(\prob_X, \alpha_X)$.
This model has wide statistical applications; for our purposes, it will ground interpretations of opinions in a statistical sense.
The probability density function (PDF) of a Dirichlet distribution can be adapted to an evidence-based formulation by setting the vector of parameters to represent the evidence for the possible outcomes of $X$.
This is done by selecting a non-informative prior weight $W$ and configuring the strength parameters $\alpha_X$ to be $\evidence_X+\base_X W$, where $\evidence_X(x)\ge 0$ for all $x\in\mathbb{X}$.
The PDF is then denoted $\Dir^e_{X}(\prob_X, \evidence_X, \base_X)$, where $\evidence_X$ is termed the evidence vector and $\base_X$ is the base rate distribution.
The non-informative prior weight is set to $W=2$, resulting in a uniform base rate distribution, as discussed in previous work~\cite{Josang-Multi-Source,Josang-SL-book}.
The PDF is then written as~(compare~\cite[Ch. 3, p. 32-33]{Josang-SL-book}):

\begin{multline}
\Dir^e_{X}(\prob_X, \evidence_X, \base_X) = \frac{
  \Gamma\left(\sum_{x\in\mathbb{X}}(\evidence_X(x)+\base_X(x)W) \right)
}{
  \prod_{x\in\mathbb{X}} \Gamma(\evidence_X(x)+\base_X(x)W)
}\cdot \\
  \prod_{x\in\mathbb{X}} \prob_X(x)^{\evidence_X(x)+\base_X(x)W-1}
\end{multline}

where $\evidence_X(x) + \base_X(x)W \ge 0$ and $\prob_X(x) \neq 0$ if $\evidence_X(x)+\base_X(x)W < 1$.

This is the multinomial Dirichlet model, as also discussed in~\cite{Josang-Multi-Source}.
Hankin~\cite{Hankin2010} formulated an extension of this model that allows evidence to be assigned to composite values, similar to hyper opinions, referred to as Dirichlet hyper-PDFs.
J{\o}sang~\cite[Sec. 3.6.3]{Josang-SL-book} defines the evidence-based formulation of those hyper-PDFs, summarized as follows (where the superscript $H$ indicates the hyper probability distribution is concerned):

\begin{multline}
  \Dir^{eH}_{X}(\prob^H_X, \evidence_X, \base_X) = \frac{
    \Gamma\left(\sum_{x\in\mathcal{R}(\mathbb{X})}(\evidence_X(x)+\base_X(x)W) \right)
}{
  \prod_{x\in\mathcal{R}(\mathbb{X})} \Gamma(\evidence_X(x)+\base_X(x)W)
}\cdot \\
  \prod_{x\in\mathcal{R}(\mathbb{X})} \prob^H_X(x)^{\evidence_X(x)+\base_X(x)W-1}
\end{multline}

where $\evidence_X(x) + \base_X(x)W \ge 0$ and $\prob^H_X(x) \neq 0$ if $\evidence_X(x)+\base_X(x)W < 1$.

J{\o}sang~\cite[Sec. 3.6.5]{Josang-SL-book} also describes the relationship between the PDF of the hyper probability distribution $p^H_X$ and the probability distribution $p_X$: we refer interested readers to this work for a detailed discussion of the interpretative differences between these cases.
For this work, it suffices that there is a well-defined grounding of both multinomial and hyper opinions in the Dirichlet multinomial model and the Dirichlet hyper-PDF model, respectively.

\subsection{Mapping Opinions to Dirichlet (H)PDFs}
\label{s_bijection}

We now introduce the relation between opinions and Dirichlet (H)PDFs.
There is a bijective map between Dirichlet HPDFs of the form $\Dir^{eH,A}_X(\prob^{H}_X,\evidence^{A}_X,\base^{A}_X)$ and hyper opinions $\omega^{A}_X=(\bel^{A}_X,u^{A}_X,\base^{A}_X)$, as shown in Figure~\ref{f_bijectiondirichletopinion}~(compare~\cite[Def. 3.9]{Josang-SL-book}).
A similar map can be defined between multinomial opinions and Dirichlet PDFs, where the range of the mapping is constrained to $x\in\mathbb{X}$ instead of $x\in\mathcal{R}(\mathbb{X})$.
Equivalently, for multinomial opinions and Dirichlet PDFs, the bijective map in Figure~\ref{f_bijectiondirichletopinion} applies where $\evidence$ and $\bel$ for all composite values is defined as $0$.

For both the multinomial and the hyper opinion setting, there is an important edge case in this bijective map when the uncertainty is $0$.
As previously discussed, this semantically means that the belief distribution is ``known'', which implies there is infinite evidence.
We briefly discuss the intuition behind this concept, as it is essential for the corresponding edge cases in the fusion operations, both in previous work~\cite{Josang-Multi-Source} and in our formulations below.
This is equivalent to taking the limit $\lim_{u^A_X\rightarrow0}$ of the case for $u^A_X\neq0$, giving the expression in Figure \ref{f_bijectiondirichletopinion}.
To distinguish between different evidence variables' frequencies, the notion of relative degrees of infinity is used~\cite{Josang-belief-fusion}.
This intuitively corresponds to the relative likelihood of the different values for the variable $X$.
For a detailed discussion of these conceptual notions, we refer interested readers to earlier work by J{\o}sang~et~al.~\cite[Sec. 4]{Josang-belief-fusion}.

\section{Multi-source Fusion}
\label{sec:multi-source-fusion}

Multi-source fusion refers to the combination of opinions from multiple sources.
Intuitively, this can be understood as a set of actors $\mathbb{A}$ coming together to agree on a common conclusion ($\omega^{\circ\mathbb{A}}_X$), using some fusion operator $\circ$.
This operator specifies precisely how to combine the information represented by each actor's subjective opinion.
For operators that are both commutative and associative, $\omega^{A_1}_X\circ\omega^{A_2}_X\circ\omega^{A_3}_X\dots$ is well-defined, and therefore fusion is straight-forward.
However, the fusion operators defined as binary operators in~\cite[Ch. 12]{Josang-SL-book} are not all associative, i.e., in general, $\omega^{A_1}_X\circ(\omega^{A_2}_X\circ\omega^{A_3}_X)\neq(\omega^{A_1}_X\circ\omega^{A_2}_X)\circ\omega^{A_3}_X$, motivating the need to generalize the operators to accept arrays of opinions that are merged through fusion.
This section defines multi-source fusion operations to generalize the operators not discussed in~\cite{Josang-Multi-Source}.

\subsection{Cumulative and Averaging Belief Fusion}
\label{sec:CBF-ABF}

In previous work, the authors~\cite{Josang-Multi-Source} propose multi-source fusion for the CBF and ABF operators.
The definitions provided by the authors for CBF (Equations 16-19) and ABF (Equations 32-35) require some adjustments: the cases defined in the original paper are incorrect and lead to division by zero.
Division by zero occurs when at least one opinion has a non-zero uncertainty, while at least two (other) opinions have zero uncertainty: this leads to Case I in the original formulation, but in the corresponding equations, all the products in the divisions for $\bel$ and $u$ are zero.
Therefore, a straight-forward correction is to require that \emph{all} uncertainties to be zero whenever applying Case I.
One should then also modify the condition for Case II accordingly, and additionally modify the equations such that any non-dogmatic opinions are discarded.
This makes sense intuitively, since any non-dogmatic opinion represents finite evidence in the Dirichlet evidence PDF representation (through the bijective map of Figure \ref{f_bijectiondirichletopinion}), while a dogmatic opinion represents infinite evidence.
Therefore, Case II discards finite evidence and combines the dogmatic opinions only.
The original authors have since published a revised version of the paper on-line~\footnote{\texttt{https://folk.uio.no/josang/papers/JWZ2017-FUSION.pdf}}.

%
%
%
%
%
%

\subsubsection{Epistemic Cumulative Belief Fusion}

Although it was not explicitly discussed in~\cite{Josang-Multi-Source}, a multi-source variant of epistemic cumulative belief fusion can also be derived from the aleatory form discussed above.
The difference between aleatory and epistemic fusion is the type of knowledge represented by the opinion; the aleatory variant describes the fusion of information with respect to specific observations, while the epistemic variant describes the fusion of knowledge.
For epistemic cumulative belief fusion (e-CBF) of multiple opinions (as opposed to aleatory CBF), the only reasonable approach is then to first apply the multi-source aleatory cumulative fusion described in~\cite{Josang-Multi-Source}, until all relevant epistemic knowledge has been cumulated. 
Only when a decision needs to be made should uncertainty maximization (as discussed in~\cite[Ch. 12]{Josang-SL-book}) be applied to the resulting opinion.
To see why this is reasonable, consider that e-CBF is used in an on-line fashion, fusing two opinions together before fusing them with a third.
Fusing an e-CBF result with epistemic knowledge from a third source skews results in favor of this third source, since the uncertainty controls how much of the belief mass of the new epistemic evidence is considered for the final result.
Instead, one would normally expect that the three opinions are considered equally in the fusion process.
Similarly, in practice, an analyst would first combine all the available knowledge (i.e., retrieve all relevant epistemic evidence from all sources), and only then make a decision -- if more information becomes available later on, one would expect the analyst to re-do the computation, rather than fuse the uncertainty-maximized result with the new information.
Thus, indeed, e-CBF should be performed by applying a-CBF to \emph{all} opinions, and then the result should be uncertainty-maximized.

\subsection{Belief Constraint Fusion}

Belief constraint fusion (BCF) is the extension of Dempster's rule of combination to hyper opinions.
Although Dempster's rule of combination is associative, this rule does not consider the base rate, as Dempster-Shafer Theory does not foresee such base rate distinctions.
If the base rate is not the same for all inputs, the operator defined by J{\o}sang~\cite[Ch. 12.2]{Josang-SL-book} is not associative, as pointed out by the author: we thus require a multi-source formulation of this operation.

\begin{definition}[BCF of opinions for multiple sources]
    Let $\mathbb{A}$ be a finite set of actors and let
    $\omega^{A}_{X}= (\bel^{A}_{X}, u^{A}_{X}, \base^{A}_{X})$ denote
    the multinomial opinion held by $A\in\mathbb{A}$ over $X$.
    We define the belief constraint fusion of these opinions as the
    opinion $\omega^{\with \mathbb{A}}_{X}= (\bel^{\with \mathbb{A}}_{X}, u^{\with \mathbb{A}}_{X}, \base^{\with \mathbb{A}}_{X})$ where

\begin{eqnarray*}
  \bel^{\with\mathbb{A}}_{X}(x) &=& m^{\oplus\mathbb{A}}(x)\\
  u^{\with\mathbb{A}}_{X} &=& m^{\oplus\mathbb{A}}(\mathbb{X})\\
  \base^{\with\mathbb{A}}_X (x) &=&
        \begin{cases}
              \frac{\sum\limits_{A\in\mathbb{A}}\base^{A}_{X}(x)\left(1-u^{A}_{X}\right)}{\sum\limits_{A\in\mathbb{A}}\left(1-u^{A}_{X}\right)} &\text{, for } \sum\limits_{A\in\mathbb{A}} u^{A}_{X} < \lvert\mathbb{A}\rvert  \\
              \frac{\sum\limits_{A\in\mathbb{A}}\base^{A}_{X}(x)}{\lvert\mathbb{A}\rvert}     &\text{, otherwise} 
  \end{cases}
\end{eqnarray*}

    Here, the belief mass is merged through Dempster's rule of combination~\cite[Ch. 3.1]{Shafer-DST}:

\begin{eqnarray}
  m^{\oplus\mathbb{A}}(x) &=& (m_1 \oplus m_2 \oplus \hdots\oplus m_{\lvert\mathbb{A}\rvert})(x)
\end{eqnarray}
where
\begin{eqnarray*}
  (m_i \oplus m_j)(\emptyset) &=& 0\\
  (m_i \oplus m_j)(x) &=& \frac{\sum\limits_{y \cap z = x} m_i(y)m_j(z)}{1-K_{i,j}} \text{, for $x\neq\emptyset$}\\
\end{eqnarray*}
and
\[
  K_{i,j} = \sum\limits_{y \cap z = \emptyset} m_i(y)m_j(z)
\]
\end{definition}

This definition essentially corresponds to the application of a map between the individual opinions to DST, applying Dempster's rule, and mapping back again.
Although it is obvious that this is well-defined for the belief and uncertainty (since the translation is one-to-one~\cite[Ch. 5]{Josang-SL-book}), the base rate needs separate consideration.
The special case where all the base rates are the same, as implicitly assumed by DST (i.e., that $\base_X(x)=1/\lvert\mathbb{X}\rvert$), also holds trivially.
However, subjective logic technically allows for different base rates: these are combined as a confidence-weighted average (first case in the definition above), as defined by J{\o}sang~\cite[Ch. 12]{Josang-SL-book}.
This only works if there is at least one non-vacuous opinion (i.e., $\exists x\in\mathbb{X},A\in\mathbb{A}:b^A_X(x)\neq0$): if all confidences are zero, the confidence is equal for all opinions, and therefore the most meaningful combination of potentially distinct base rates is computing their average.
Note that both cases preserve the standard setting, where base rates are equal across all opinions (including the output).

\subsection{Weighted Belief Fusion}

Audun J{\o}sang defined the weighted belief fusion of two opinions in~\cite[Def. 12.8, p. 232]{Josang-SL-book}, which we extend here to a multi-source variant as follows:

\begin{definition}[WBF of opinions for multiple sources]\label{d_wbf}
    Let $\mathbb{A}$ be a finite set of actors and let
    $\omega^{A}_{X}= (\bel^{A}_{X}, u^{A}_{X}, \base^{A}_{X})$ denote
    the multinomial opinion held by $A\in\mathbb{A}$ over $X$.
    Then we define the weighted belief fusion of these opinions as the
    opinion $\omega^{\WBF \mathbb{A}}_{X}= (\bel^{\WBF \mathbb{A}}_{X}, u^{\WBF \mathbb{A}}_{X}, \base^{\WBF \mathbb{A}}_{X})$
    as follows:

    \underline{Case 1:} $(\forall A\in\mathbb{A}: u^{A}_X \neq 0) \wedge (\exists A\in\mathbb{A}: u^{A}_X \neq 1)$

\begin{eqnarray*}
  \bel^{\WBF \mathbb{A}}_X(x) &=& \frac{\sum\limits_{A\in\mathbb{A}} \bel^{A}_X(x) \left(1 - u^{A}_X\right) \prod\limits_{A' \in \mathbb{A}, A' \neq A} u^{A'}_X}{\left(\sum\limits_{A\in\mathbb{A}}\prod\limits_{A'\neq A} u^{A}_{X}\right) - \lvert\mathbb{A}\rvert\cdot\prod\limits_{A \in \mathbb{A}} u^{A}_X}\\
  u^{\WBF \mathbb{A}}_X &=& \frac{\left(\lvert\mathbb{A}\rvert - \sum\limits_{A \in \mathbb{A}} u^{A}_X\right)\cdot\prod\limits_{A \in \mathbb{A}} u^{A}_X}{\left(\sum\limits_{A\in\mathbb{A}}\prod\limits_{A'\neq A} u^{A}_{X}\right) - \lvert\mathbb{A}\rvert\cdot\prod\limits_{A \in \mathbb{A}} u^{A}_X} \\
  \base^{\WBF \mathbb{A}}_X(x) &=& \frac{\sum\limits_{A\in\mathbb{A}} \base^{A}_X(x) \left(1 - u^{A}_X\right)}{\lvert\mathbb{A}\rvert - \sum\limits_{A \in \mathbb{A}} u^{A}_X}
\end{eqnarray*}

    \underline{Case 2:} $ \exists A\in\mathbb{A}: u^{A}_X = 0$. Let $\mathbb{A}^{\text{dog}} = \{A\in\mathbb{A}: u^A_X = 0 \}$, i.e., $\mathbb{A}^{\text{dog}}$ is the set of all dogmatic opinions in the input.

\begin{eqnarray*}
  \bel^{\WBF \mathbb{A}}_X(x) &=& \sum\limits_{A\in\mathbb{A}^{\text{dog}}} \gamma^A_X\bel^A_X(x)\\
  u^{\WBF \mathbb{A}}_X &=& 0 \\
  \base^{\WBF \mathbb{A}}_X(x) &=& \sum\limits_{A\in\mathbb{A}^{\text{dog}}} \gamma^A_X\base^A_X(x)\\
      \text{where } \gamma^{A}_{X} &=& \lim\limits_{u^{\WBF \mathbb{A}}_X \rightarrow 0} \frac{u^A_X}{\sum\limits_{A'\in\mathbb{A}^{\text{dog}}}u^{A'}_X}
\end{eqnarray*}

  Note that $\gamma^A_X$ is defined by this limit due to the bijective map (analogous to~\cite[Ch. 12]{Josang-SL-book}~and~\cite{Josang-Multi-Source}).

      \underline{Case 3:} $\forall A\in\mathbb{A}: u^{A}_X = 1$

\begin{eqnarray*}
  \bel^{\WBF \mathbb{A}}_X(x) &=& 0\\
  u^{\WBF \mathbb{A}}_X &=& 1 \\
  \base^{\WBF \mathbb{A}}_X(x) &=& \frac{\sum\limits_{A\in\mathbb{A}} \base^A_X(x)}{|\mathbb{A}|}\\
\end{eqnarray*}
\end{definition}

\begin{remark}[Infinite evidence]
  In Definition \ref{d_wbf}, case 2 describes the resulting combined belief using the relative weight $\gamma^A_X$ per actor with infinite evidence.
  This is analogous to the notion of relative infinities in the bijective map discussed in Section \ref{s_bijection}.
  We exclude all finite evidence parameters here, by only considering the actors with dogmatic opinions: all non-dogmatic actors will have finite (and therefore negligible) evidence.
\end{remark}

\begin{remark}[No evidence]
  If no evidence is available (Definition \ref{d_wbf}, case 3) in any of the inputs, this operation would divide by zero (since $u^A_X=1$ for all actors, making the nominator 0 in case 1).
  However, if no evidence is available, the fused opinion should obviously also have no evidence: this justifies the definition in case 3.
\end{remark}

We remark that confidence-weighted combination of base rates as defined in Definition \ref{d_wbf} maintains the intuitive property that if all base rates of inputs are equal, the output has the same base rate, regardless of the case.
Intuitively, since the base rate represents the belief in absence of information, this should always be the case, but the theory allows actors to have different base rates, in which case they are averaged.

This derivation should intuitively correspond to that derived from the Dirichlet HPDF, as shown in~\cite[Thm. 12.4]{Josang-SL-book} for two opinions.
In short, this theorem derives that the weighted belief fusion operator for two opinions is equivalent to confidence-weighted averaging of the evidence parameters of the two corresponding Dirichlet HPDFs.
We should thus show that our definition corresponds to the confidence-weighted averaging of the evidence parameters of all opinions.
This is defined as:
\begin{eqnarray}
  Dir^{eH}_{X}(\prob^{H}_{X}, \evidence^{\WBF \mathbb{A}}_{X}, \base^{\WBF\mathbb{A}}_{X}), \text{where}\\
  \evidence^{\WBF \mathbb{A}}_{X}(x)=\frac{\sum_{A\in\mathbb{A}} \evidence^{A}_{X}(x)\cdot (1-u^{A}_{X})}{\sum_{A\in\mathbb{A}} (1-u^{A}_X)}\nonumber
\end{eqnarray}

\begin{theorem}
    Our definition of weighted belief fusion of opinions for multiple sources is compatible with weighted belief fusion of Dirichlet HPDFs.
\end{theorem}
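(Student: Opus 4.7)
The plan is to verify compatibility by following the natural commuting diagram: starting from opinions $\omega^A_X$, map each to its Dirichlet HPDF via Figure~\ref{f_bijectiondirichletopinion} to obtain evidence vectors $\evidence^A_X$, form the confidence-weighted average
\[
  \evidence^{\WBF\mathbb{A}}_X(x) = \frac{\sum_{A\in\mathbb{A}} \evidence^A_X(x)(1-u^A_X)}{\sum_{A\in\mathbb{A}}(1-u^A_X)},
\]
and map the resulting HPDF back to an opinion. The goal is to show, case by case, that the resulting triple $(\bel,u,\base)$ coincides with the one in Definition~\ref{d_wbf}.

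For Case~1, where every $u^A_X$ is strictly between $0$ and $1$ (with at least one strictly less than $1$), I substitute $\evidence^A_X(x) = W\bel^A_X(x)/u^A_X$ into the averaged evidence formula. Summing over $x$ and using $\sum_{x\in\mathcal{R}(\mathbb{X})}\bel^A_X(x) = 1-u^A_X$ yields the key simplification $\sum_x \evidence^{\WBF\mathbb{A}}_X(x) = W\bigl(\sum_A(1-u^A_X)^2/u^A_X\bigr)/\sum_A(1-u^A_X)$. Substituting this into $u = W/(W+\sum_x\evidence)$ telescopes to the compact form $\sum_A(1-u^A_X)/\sum_A(1-u^A_X)/u^A_X$; multiplying numerator and denominator by $\prod_A u^A_X$ recovers precisely the symmetric-polynomial expression in Definition~\ref{d_wbf}. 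The same manipulation for $\bel^{\WBF\mathbb{A}}_X(x)$ reproduces the belief formula, and the base-rate expression is already in confidence-weighted-average form and matches by inspection.

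For Case~2, where some opinions are dogmatic, the identity $\evidence^A_X = W\bel^A_X/u^A_X$ becomes singular, so I would perform the analysis as a simultaneous limit as $u^A_X\to 0$ for $A\in\mathbb{A}^{\text{dog}}$, invoking the notion of relative infinities from Section~\ref{s_bijection}. Any non-dogmatic opinion contributes finite evidence, while each dogmatic opinion contributes evidence of order $1/u^A_X\to\infty$; after averaging and re-mapping through the $u^A_X=0$ branch of Figure~\ref{f_bijectiondirichletopinion}, the finite contributions become negligible and the ratio $u^A_X/\sum_{A'\in\mathbb{A}^{\text{dog}}} u^{A'}_X$ survives in the limit as the weight $\gamma^A_X$, yielding $\bel^{\WBF\mathbb{A}}_X(x) = \sum_{A\in\mathbb{A}^{\text{dog}}}\gamma^A_X\bel^A_X(x)$ as required. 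Case~3 (all vacuous) is immediate: every $\evidence^A_X$ is the zero vector, so the average is zero and the inverse map returns the vacuous opinion; the base rate reduces to the plain average, matching the definition.

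The main obstacle is Case~2: the limiting argument has to be set up carefully so that the ratios $u^A_X/u^{A'}_X$ between dogmatic actors remain finite and nonzero (that is, they parametrise the relative degrees of infinity) while the non-dogmatic $u^{A'}_X$ stay bounded away from $0$. This is the same limiting discipline already adopted in~\cite{Josang-Multi-Source} and~\cite[Ch.~12]{Josang-SL-book}, so once that framework is imported the residual computation is a finite sum-of-ratios identity essentially identical in form to Case~1.
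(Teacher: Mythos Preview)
Your approach is correct and is essentially the same route the paper takes: map each opinion to its evidence vector via Figure~\ref{f_bijectiondirichletopinion}, form the confidence-weighted average, and map back, with the key algebraic step being the substitution $\sum_{x}\bel^A_X(x)=1-u^A_X$ followed by clearing denominators with $\prod_A u^A_X$. In fact you go further than the paper's own proof, which only writes out the derivation of $u^{\WBF\mathbb{A}}_X$ in Case~1 and declares that ``the other derivations follow the same basic pattern''; your sketches of Cases~2 and~3 (via the relative-infinity limiting discipline and the vacuous edge case) are exactly the kind of argument the paper gestures at but does not spell out.
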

\begin{proof}
    We want to show that the WBF of multiple Dirichlet HPDFs
    corresponding to subjective opinions, when mapped back to a
    subjective opinion yield the formulas in~\Cref{d_wbf}.
    We show only the derivation of $u^{\WBF \mathbb{A}}_X$, where
    $u^A_X\neq 0$ for all $A\in\mathbb{A}$.
    The other derivations follow the same basic pattern and do not
    provide any additional insights.

    Using the mapping between Dirichlet HPDFs and hyper opinions
    from~\Cref{f_bijectiondirichletopinion} we compute the evidence
    of multiple fused HPDFs corresponding to hyper opinions as follows.
    \begin{eqnarray}
    \evidence^{\WBF \mathbb{A}}_{X}(x) & = & \frac{\sum\limits_{A\in\mathbb{A}} \evidence^{A}_{X}(x)\cdot (1-u^{A}_{X})}{\sum\limits_{A\in\mathbb{A}} (1-u^{A}_X)} \\
    & = & \frac{\sum\limits_{A\in\mathbb{A}}\left(\bel^{A}(x)(1-u^{A}_X)\cdot\prod\limits_{A'\neq A}u^{A}_X\right)}{\sum\limits_{A\in\mathbb{A}}(1-u^{A}_X) \prod\limits_{A\in\mathbb{A}} u^{A}_X}\nonumber
    \end{eqnarray}

    Again, using the mapping in~\Cref{f_bijectiondirichletopinion} we know that
    the uncertainty of the fused HPDFs is
    \[
        u_X^{\WBF \mathbb{A}}=\frac{W}{W + \sum\limits_{x\in\mathcal{R}(\mathbb{X})}\evidence_X^{\WBF \mathbb{A}}(x)}
    \]
    Plugging in $\evidence^{\WBF \mathbb{A}}_{X}(x)$ we can cancel $W$ and receive
    \begin{multline*}
        u_X^{\WBF \mathbb{A}}=\left(\sum\limits_{A\in\mathbb{A}}(1-u^{A}_X)\prod\limits_{A\in\mathbb{A}} u^{A}_X\right)\cdot \\
        \left(\sum\limits_{A\in\mathbb{A}}(1-u^{A}_X) \prod\limits_{A\in\mathbb{A}} u^{A}_X + \right. \\
        \left.\sum\limits_{x\in\mathcal{R}(\mathbb{X})}\sum\limits_{A\in\mathbb{A}}\bel^{A}(x)(1-u^{A}_X)\prod\limits_{A'\neq A}u^{A'}_X\right)^{-1}
    \end{multline*}

    Next, we can substitute
    $\sum_{x\in\mathcal{R}(\mathbb{X})}\bel^A_X(x)$ by
    $1-u^A_X$.\\
    A tedious but not difficult computation yields the result.
\end{proof}

\subsection{CC Fusion}
%

CC fusion (CCF) is conceptually designed to first achieve consensus, conserving the agreed weight of all inputs, and then compute a weighted compromise for the remaining belief mass based on the relative uncertainty and the corresponding base rates relative to intersecting sets.
Because this second step generally results in a total distribution greater than $1$, a normalization factor $\eta$ is used, which is multiplied with the compromise belief to compute the final fused result.
Let again $\mathbb{A}$ be a finite set of actors and let $\omega^{A}_{X}= (\bel^{A}_{X}, u^{A}_{X}, \base_{X})$ denote the multinomial opinion held by $A\in\mathbb{A}$ over $X$.
Then, CC fusion $\omega_X^{\CC\mathbb{A}}$ is defined as the result of the following three
computation phases: 1) consensus phase, 2) compromise phase, 3)
normalization phase.

\underline{Step 1:} Consensus Phase\\
In the first phase, the consensus belief $\bel^{\text{cons}}_{X}(x)$ is computed as a minimum common
belief per $x$.
The residual beliefs $\bel^{\text{res}A}_{X}(x)$ of each actor are
the differences between their belief and the consensus belief.
The total consensus $b_X^{\text{cons}}$ is the sum of all consensus
beliefs.

\begin{eqnarray}
  \bel^{\text{cons}}_{X}(x) &=& \min_{A\in\mathbb{A}} \bel^{A}_{X}(x) \\
  \bel^{\text{res}A}_{X}(x) &=& \bel^{A}_{X}(x) - \bel^{\text{cons}}_{X}(x)\text{, for each $A\in\mathbb{A}$} \nonumber \\
  b_X^{\text{cons}} &=& \sum_{x\in\mathcal{R}(\mathbb{X})} \bel^{\text{cons}}_{X}(x) \nonumber
\end{eqnarray}

\underline{Step 2:} Compromise Phase\\
The compromise belief $\bel^{\text{comp}}_X(x)$ in a frame $x$ is computed by adding four components: 1) the residue belief, weighted by all other actors' uncertainty, 2) the common belief in frame sets where the intersection is $x$, 3) the common belief in sets where the union is $x$, but whose intersection is non-empty, and 4) the common belief in the sets where the union is $x$ and the intersection is empty.
This can be formulated as follows:

\begin{multline}
    \bel^{\text{comp}}_X(x) = \sum\limits_{A\in\mathbb{A}} \bel^{\text{res}A}_X(x) \cdot \prod\limits_{A'\in\mathbb{A}, A' \neq A} u^{A'}_X  \\
     + \sum\limits_{\substack{y_1,\dots,y_{\lvert\mathbb{A}\rvert}\\\text{ s.t. }\cap_i y_i=x}} \prod\limits_{i=1}^{\lvert\mathbb{A}\rvert} \bel^{\text{res}A_i}_X(y_i) \cdot \base_X(y_i\mid y_j,\,j\neq i) \\
     + \sum\limits_{\substack{y_1,\dots,y_{\lvert\mathbb{A}\rvert}\\\text{ s.t. }\cup_i y_i=x \\\text{ and }\cap_i y_i\neq\emptyset}} \left(
        \left(1 - \prod\limits_{i=1}^{\lvert\mathbb{A}\rvert} \base_X(y_i\mid y_j,\,j\neq i)\right)
            \cdot \prod\limits_{i=1}^{\lvert\mathbb{A}\rvert} \bel^{\text{res}A_i}_X(y_i)
        \right)
    \\
     +\sum\limits_{\substack{y_1,\dots,y_{\lvert\mathbb{A}\rvert}\\\text{ s.t. }\cup_i y_i=x \\\text{ and }\cap_i y_i=\emptyset}} \prod\limits_{i=1}^{\lvert\mathbb{A}\rvert}\bel^{\text{res}A_i}_X(y_i) \text{ where } x\in\mathcal{P}(\mathbb{X})
\end{multline}

The second, third and forth terms of this equation taken together basically iterate over a tabulation of the different valid combinations for $y_i\in\mathcal{P}(\mathbb{X})$, and depending on the constraints defined under the sum, one of the three computations is made.
Since set operations are associative and commutative, the ordering of $\mathbb{A}$ implied by this equation is irrelevant, since all orderings are included in the tabulation.
This is analogous to the way this operator was initially defined by~J{\o}sang~\cite{Josang-SL-book}.

Additionally, in the second phase the preliminary uncertainty mass $u_X^{\text{pre}}$ is computed as the product of all uncertainties.
The intuition is that this represents the common uncertainty all actors agree on.
The value $b_X^{\text{comp}}$ is computed as sum of the compromise beliefs over $\mathcal{P}(\mathbb{X})$:

\begin{align}
  u_X^{\text{pre}} &= \prod_{A\in\mathbb{A}}u^{A}_X\\
  b_X^{\text{comp}} &= \sum_{x\in\mathcal{P}(\mathbb{X})} \bel^{\text{comp}}_X(x)
\end{align}

These values are used in the next phase.

\underline{Step 3:} Normalization Phase\\
Since mostly $b_X^{\text{cons}} + b_X^{\text{comp}} + u_X^{\text{pre}}
< 1$, a normalization factor $\eta$ of $b_X^{\text{comp}}$ has to be
applied.
We compute $\eta$ as follows.

\begin{equation}
  \eta = \frac{1 - b_X^{\text{cons}} - u_X^{\text{pre}}}{b_X^{\text{comp}}}
\end{equation}

The belief on the entire domain $\mathbb{X}$ is then added to the uncertainty.
The intuition behind this is that a belief in $\mathbb{X}$ is the belief that \emph{anything} will happen.
Referring back to the tournament example from Hankin, the belief that $\mathbb{X}$ will win is the belief that a winner exists, without distinguishing between the singleton values in this set, which is equivalent to non-informative belief, i.e., uncertainty.

\begin{equation}
  u_X^{\CC \mathbb{A}} = u_X^{\text{pre}} + \eta \bel_X^{\text{comp}}(\mathbb{X}).
\end{equation}

After this transfer, we set $\bel_X^{\text{comp}}(\mathbb{X}) = 0$.

Now, the belief can be combined over all $x\in\mathcal{R}(\mathbb{X})$
after applying the normalization factor.
\begin{equation}
    \bel^{\CC\mathbb{A}}_X(x) = \bel_X^{\text{cons}}(x) + \eta\bel^{\text{comp}}_X(x).
\end{equation}

\begin{definition}[CC-Fusion of multiple sources]
Given a finite set of actors $\mathbb{A}$ and their multinomial
opinions $\omega^{A}_{X}= (\bel^{A}_{X}, u^{A}_{X}, \base_{X})$
over $X$ for $A\in\mathbb{A}$.
We define the resulting CC-fused opinion as a result of the previous
three-step computation as
$\omega_X^{\CC\mathbb{A}} = (\bel_X^{\CC\mathbb{A}}, u_X^{\CC\mathbb{A}}, \base_X)$.
\end{definition}

As this fusion operation is defined for subjective logic only, we do not prove it's equivalence to other operations in other models.
However, we note that this operation always generates valid opinions, and that our definition reduces to the original definition for the case that $\lvert\mathbb{A}\rvert=2$.

\begin{remark}[Multi-source CCF always generates valid opinions]
  This holds if the output of multi-source CCF produces valid beliefs and uncertainties (in the range of $[0,1]$), and the additive property holds, i.e., $\sum_{x\in\mathcal{P}(\mathbb{X})}b_X(x) = 1$.
  The first property holds, since the definition consists only of additions and multiplications of non-negative numbers, except for the definition $\bel^{\text{res}A}_X(x)=b^A_X(x) - b^{\text{cons}}_X(x)$, which is at least zero, since the latter term is the minimum of all the beliefs in $x$.
  The second property holds due to the choice of $\eta$, and because $\bel_X^A(\emptyset)=0$ for well-defined belief functions:
\begin{eqnarray*}
    1&\stackrel{!}{=}& u^{\CC\mathbb{A}}_X + \sum\limits_{x\in\mathcal{R}(\mathbb{X})}\bel^{\CC\mathbb{A}}_X\\
  &=& u^{\text{pre}}_X + \eta\bel^{\text{comp}}_X(\mathbb{X}) + \sum\limits_{x\in\mathcal{R}(\mathbb{X})}\left(\bel^{\text{cons}}_X(x) + \eta\bel^{\text{comp}}_X(x)\right)\\
  &=& u^{\text{pre}}_X + b^{\text{cons}}_X + \eta b^{\text{comp}}_X\\
  &=& u^{\text{pre}}_X + b^{\text{cons}}_X + 1-b^{\text{cons}}_X - u^{\text{pre}}_X = 1\\
\end{eqnarray*}
\end{remark}


\begin{remark}[Multi-source CCF is a generalization of the CC fusion operator]

  With $\mathbb{A}=\{A,B\}$, the steps resolve to these statements are as follows:

\begin{eqnarray}
  \bel^{\text{cons}}_{X}(x) &=& \min (\bel^{A}_{X}(x), \bel^{B}_{X}(x))\\
  \bel^{\text{res}A}_{X}(x) &=& \bel^{A}_{X}(x) - \bel^{\text{cons}}_{X}(x)\\
  \bel^{\text{res}B}_{X}(x) &=& \bel^{B}_{X}(x) - \bel^{\text{cons}}_{X}(x)\\
  b_X^{\text{cons}} &=& \sum_{x\in\mathcal{R}(\mathbb{X})} \bel^{\text{cons}}_{X}(x)
\end{eqnarray}

\begin{multline}
  \bel^{\text{comp}}_X(x) = \bel^{\text{res}A}_X(x) u^B_X +\bel^{\text{res}B}_X(x) u^A_X \\
     + \sum\limits_{\substack{y_1,y_2\\\text{ s.t. } y_1 \cap y_2=x}} \bel^{\text{res}A}_X(y_1)  \base_X(y_1|y_2)  \base_X(y_2|y_1)\\
     + \sum\limits_{\substack{y_1,y_2\\\text{ s.t. } y_1 \cup y_2=x\\\text{ and } y_1 \cap y_2 \neq \emptyset}} (1-\base_X(y_1|y_2)\base_X(y_2|y_1)) \bel^{\text{res}A}_X(y_1)  \bel^{\text{res}B}_X(y_2)\\
     + \sum\limits_{\substack{y_1,y_2\\\text{ s.t. } y_1 \cup y_2=x\\\text{ and } y_1 \cap y_2 \neq \emptyset}} \bel^{\text{res}A}_X(y_1)\bel^{\text{res}B}_X(y_2)
\end{multline}

\begin{align}
  u_X^{\text{pre}} &= u^A_X u^B_X\\
  b_X^{\text{comp}} &= \sum_{x\in\mathcal{P}(\mathbb{X})} \bel^{\text{comp}}_X(x)
\end{align}

\begin{equation}
  \eta = \frac{1 - b_X^{\text{cons}} - u_X^{\text{pre}}}{b_X^{\text{comp}}}
\end{equation}

\begin{equation}
  u_X^{\CC \mathbb{A}} = u_X^{\text{pre}} + \eta \bel_X^{\text{comp}}(\mathbb{X})
\end{equation}

  and finally, setting $\bel^{\text{comp}}(\mathbb{X})=0$.

  Except for differences in notation, this is the same as the definitions provided in~\cite[Ch. 12]{Josang-SL-book}.
\end{remark}
%
%
%

\section{Example}
\label{sec:example}

\begin{table*}[ht!]
\centering
  \caption{Extending the examples from~\cite{Josang-Multi-Source}. The numbers presented here are rounded. The final row of the table presents the projection of each operation to the probabilistic space.}
\label{tab:example}
\begin{tabular}{@{}l|lll|llllll@{}}
    \toprule
    & \multicolumn{3}{c|}{Inputs} & \multicolumn{6}{c}{Fusion}\\
Parameters   & $A_1$ & $A_2$  & $A_3$ & aCBF       & \textbf{eCBF} & \textbf{BCF} & ABF    & \textbf{WBF} & \textbf{CCF} \\
\hline
$\bel(x)$       & 0.10  & 0.40   & 0.70  & 0.651      & 0.442         & 0.738        & 0.509  & 0.562        & 0.629        \\
$\bel(\bar{x})$ & 0.30  & 0.20   & 0.10  & 0.209      & 0             & 0.184        & 0.164  & 0.146        & 0.182        \\
$u$          & 0.60  & 0.40   & 0.20  & 0.140      & 0.558         & 0.078        & 0.327  & 0.292        & 0.189        \\
$\base(x)$       & 0.5   & 0.5    & 0.5   & 0.5        & 0.5           & 0.5          & 0.5    & 0.5          & 0.5          \\
    \midrule
$P(x)$       & 0.40  & 0.60   & 0.80  & 0.721      & 0.721         & 0.777        & 0.673  & 0.708        & 0.723        \\
    \bottomrule
\end{tabular}
\end{table*}

In this section, we extend the table from J{\o}sang~et~al.~\cite{Josang-Multi-Source} with results for the fusion operations we have defined.
This example considers a situation where three sources, $\mathbb{A}=\{A_1,A_2,A_3\}$, provide opinions over the same binary domain $\mathbb{X}=\{x,\bar{x}\}$.
These opinions can be merged using the operators we discussed in this paper: the numerical results are shown in Table~\ref{tab:example}.
An existing open-source implementation for binomial opinions was extended with these operations\footnote{\texttt{https://github.com/vs-uulm/subjective-logic-java}}.
We briefly discuss the functionality of each new operation (highlighted in bold in the table).

\begin{itemize}
  \item \textbf{epistemic Cumulative Belief Fusion} (eCBF) is the uncertainty-maximized version of aCBF.
    As previously mentioned, \emph{epistemic} here refers to the fact that the opinions involved are used for representing knowledge, rather than documenting specific observations.
    Epistemic opinions are always uncertainty maximized: they represent exactly the available information about $X$ beyond the base rate $\base$, and no more.
    This operation is useful for artificial reasoning about abstract events, which are not tied to a specific instance.
  \item \textbf{Belief Constraint Fusion} (BCF) is the generalization of Dempster's rule of combination that considers distinct base rates.
    The interpretation of BCF has been the subject of many works over the years~\cite{Zadeh1984,Josang-SL-book,Shafer2016}, most importantly noting that conflict is essentially discarded.
  \item \textbf{Weighted Belief Fusion} (WBF) is the evidence-weighted combination of belief from different sources.
    Similar to averaging belief fusion, it is useful where dependence between these sources is assumed.
    However, it introduces additional weighting, increasing the significance of sources that possess high certainty.
    This automatic weighting is useful for the combination of evidence from a large variety of uncertain sources which output high certainty in very specific scenarios.
    In such cases, ABF issues high uncertainty, while WBF outputs what is intuitively a consensus between experts: each expert individually chooses their confidence, and thus their weight in the decision.
  \item \textbf{Consensus \& Compromise Fusion} (CCF) is specifically designed to create vague belief from conflicting belief on singleton values.
    More precisely, generating a vague belief refers to the derivation of a belief of a composite value $x_1 \cap x_2$ if two opinions have high belief in $x_1$ and $x_2$ respectively.
    This is useful when different experts generate opinions identifying different options, such as when doctors with different expertise suggest potential causes in a diagnostic process.
    The fused opinion reflects the opinion of all experts, and illustrates the group as a whole is certain about a certain set of potential causes, without expressing relative likelihood.
\end{itemize}

Potential applications of these operations include their use for multi-source fusion between entities.
In our previous work, we have proposed the use of subjective logic for a misbehavior detection framework~\cite{Dietzel2014}, which we have since implemented.
For our analysis, multi-source fusion with subjective logic was required; in our future work, we intend to adopt the results presented here into this framework.

\balance

\section{Conclusion}
\label{sec:conclusion}

In this article, we have introduced multi-source fusion operations for several existing fusion operators: the belief constraint fusion, the weighted belief fusion, and the consensus \& compromise fusion.
Since the native operators' definitions are non-associative, they require a definition to enable fusion of information from multiple sources.
We also discuss the intuition behind the fusion process and the corresponding interpretation of the fusion result.
Finally, we have proven that WBF corresponds to the confidence-weighted averaging of evidence in the Dirichlet multinomial model.

In our future work, we aim to implement and apply these new multi-source fusion operations in our work, which is centered around attack detection through the fusion of information from different sources.
We currently foresee a combination of fusion operations with trust discounting and trust revision to update trust in various sources.
We intend to apply both WBF and CCF for this fusion process, but for different source types.

Finally, we would like to comment that both Dempster-Shafer Theory~\cite{Zadeh1984,Tchamova2012} and Subjective Logic~\cite{Dezert2014} have been the subject of controversy over the years.
The entirety of our work relies on the recently published book discussing subjective logic~\cite{Josang-SL-book}, which contains a clear map between subjective opinions and Dirichlet PDFs, resolving the main issue raised by Dezert~et~al.~\cite{Dezert2014} (namely that different mappings have been proposed, and it is unclear which is the valid one).
Nevertheless, the criticisms and alternative proposals made by these and other authors should be considered seriously.
To our best understanding, the concerns raised have been resolved sufficiently, which is also evidenced by the continued study and extension of subjective logic at major publication platforms in the field~\cite{Josang-Multi-Source}.
We argue that our proposed derivations of multi-source fusion operations is a useful contribution to the field.

\section*{Acknowledgment}

We would like to thank David K\"ohler, whose work with an implementation of subjective logic led to the discovery of the inconsistency discussed in \Cref{sec:CBF-ABF}, and Audun J{\o}sang for an extensive discussion that resulted in this paper.
This work was supported in part by the Baden-W\"{u}rttemberg Stiftung gGmbH Stuttgart as part of the project IKT-05 AutoDetect of its IT security research programme.

\bibliographystyle{IEEEtran}
\bibliography{IEEEabrv,references}

\end{document}